\newcommand{\raf}{{\color{red}}}
\newcommand{\mathsep}{,~}
\newcommand{\set}[1]{\left\lbrace #1 \right\rbrace}
\newcommand{\card}[1]{\left\lvert{#1}\right\rvert}
\newcommand{\norm}[2]{\left\lVert{#1}\right\rVert_{#2}}
\newcommand{\setR}{\mathbb R}
\newcommand{\pb}[1]{\mathbb P\left[#1\right]}
\newcommand{\expect}[1]{\mathbb E\left[#1\right]}
\DeclareMathOperator*{\argmin}{arg\,min}
\newtheorem{theorem}{Theorem}
\newtheorem{remark}{Remark}
\newcommand{\user}{n}
\newcommand{\USER}{N}
\newcommand{\param}[1]{x_{#1}}
\newcommand{\paramfamily}[1]{\vec x_{#1}}
\newcommand{\honestfamily}[1]{\vec x_\HONEST}
\newcommand{\faultyfamily}[1]{\vec x_\FAULTY}
\newcommand{\HONEST}{H}
\newcommand{\mean}{\bar x}
\newcommand{\honestmean}{\bar x_\HONEST}
\newcommand{\aggregation}{\widehat{\textsc{mean}}}
\newcommand{\faulty}{f}
\newcommand{\FAULTY}{F}
\newcommand{\radius}{\Delta}
\newcommand{\inputspace}{\mathcal{Y}}
\newcommand{\outputspace}{\mathcal{Z}}
\newcommand{\distribution}[1]{\mathcal{D}_{#1}}
\newcommand{\dataset}[1]{D_{#1}}
\newcommand{\datafamily}{\vec{D}}
\newcommand{\gradient}{g}
\newcommand{\parameter}{\theta}
\newcommand{\dimension}{d}
\newcommand{\query}{y}
\newcommand{\answer}{z}
\newcommand{\function}[1]{f_{#1}}
\newcommand{\Loss}{\textsc{Loss}}
\newcommand{\localloss}[1]{\mathcal L_{#1}}
\newcommand{\lossperinput}{\ell}
\newcommand{\regularization}{\mathcal R}
\newcommand{\ball}{\mathcal B}
\newcommand{\unitvector}{\bf u}
\begin{document}

\title{On the Impossible Safety of Large AI Models}

\author[1,2]{El-Mahdi El-Mhamdi}
\author[3]{Sadegh Farhadkhani}
\author[3]{Rachid Guerraoui}
\author[3]{Nirupam Gupta}
\author[2,4]{Lê-Nguyên Hoang}
\author[3]{Rafaël Pinot}
\author[2]{Sébastien Rouault}
\author[3]{John Stephan}

\affil[1]{École Polytechnique}
\affil[2]{Calicarpa}
\affil[3]{EPFL}
\affil[4]{Tournesol Association}

\date{}
\maketitle

\begin{abstract}
Large AI Models (LAIMs), 
of which large language models are the most prominent recent example, 
showcase some impressive performance.
However they have been empirically found to pose serious security issues.
This paper systematizes our knowledge about the \emph{fundamental impossibility} of building arbitrarily accurate and secure machine learning models. More precisely, we identify key challenging features of many of today's machine learning settings.
Namely, high accuracy seems to require \emph{memorizing} large training datasets, which are often \emph{user-generated} and \emph{highly heterogeneous}, with both \emph{sensitive information} and \emph{fake users}. We then survey statistical lower bounds that, we argue, constitute a compelling case against the possibility of designing high-accuracy LAIMs with strong security guarantees.
\end{abstract}

\section{Introduction}
In recent years, we have witnessed a race for developing larger and larger artificial intelligence (AI) models. Notable milestones in this trend are {\em Attention Networks} (213 million parameters)~\cite{VaswaniSPUJGKP17}, {\em GPT-2} (1.5 billion parameters)~\cite{RadfordWCLA+19}, {\em GPT-3} (175 billion parameters)~\cite{BrownMRSK+20}, {\em Switch Transformer} (1.6 trillion parameters)~\cite{FedusZS21}, {\em Persia} (over 100 trillion parameters)~\cite{LianYZWH+21}, and  {\em GPT-4} (unknown number of parameters)~\cite{bubeck2023sparks}. The scaling of model sizes has shown improvement in the accuracies on classical tasks, such as GLUE~\cite{WangSMHLB19}, SuperGLUE~\cite{WangPNSMHLB19} and Winograd~\cite{SakaguchiBBC20}, without significant diminishing returns so far (see, e.g., Figure 1 in~\cite{BrownMRSK+20}). Moreover large AI models (or LAIMs) can also be used as {\em few-shot learners}~\cite{BrownMRSK+20}, which has motivated their wide use as pre-trained \emph{base} (or {\em foundation}) models~\cite{ChurchCM21,ChuaLL21,JuLZ22, VulicPKG20, ZhangWKWA21}. This success has generated enormous academic, economic and political interests into the development and deployment of LAIMs in public domain applications including content moderation, recommendation, search and ad targeting~\cite{Pathways,Heikkila21}.

Contrary to the conventional wisdom of {probably approximately correct} (PAC) learning~\cite{Valiant84}, the performance of LAIMs has been empirically shown to be best achieved by fully {\em interpolating} the training data~\cite{BelkinHMM19,NakkiranKBYBS20, ZhangBHRV17}. Put differently, the best accuracy is reached when these models \emph{memorize} their training data~\cite{feldman20}. This phenomenon has also been theoretically supported to a certain extent by a recent line of work~\cite{BelkinHX20, BelkinMM18,BelkinRT19,JacotSSHG20,HeckelY21,Holzmuller21,LiuLS21,MeiMontanari19,MuthukumarVSS20,NakkiranVKM21}. 
Furthermore, training LAIMs requires access to massive amounts of high-dimensional training data, which too often amounts to barely filtered {\em user-generated} data. 
Hence, LAIMs raise serious {\em security} concerns. 
On the one hand, the memorization of user-generated data 
endangers users' {\em privacy},
as demonstrated in recent papers on large language models (or LLMs), which are a sub-class of LAIMs~\cite{CarliniTWJH+20,InanRWJR+21,PanZJY20,ZouZBZ20, carlini2019secret}. 
On the other hand, 
LAIMs are also vulnerable to malicious data providers\footnote{In 2019 alone, Facebook removed \emph{6 billion} fake accounts from its platform~\cite{FungGarcia19}.}. 
Namely, (fake) users can (voluntarily) bias AIs' behavior, by
\emph{poisoning} the training data with hateful, violent or harmful content,
or by labeling positively such content through likes and shares,
especially when it comes to search and recommendation AIs~\cite{goldstein2023generative} 
in the context of the global disinformation war~\cite{seib2021}. 
In short, since LLMs 
are ``stochastic parrots'' that repeat and amplify their training data~\cite{BenderGMS21}, 
they too strongly encourage data poisoning,
and may be manipulated by this poisoning.

Now, one might argue that these security flaws of today's LAIMs are specific to contemporary AI practices, and that these vulnerabilities will eventually be patched without accuracy degradation. 
We argue the contrary. 
Namely, we claim that securing LAIMs will require a significant accuracy loss. 
Specifically, by leveraging the {\em privacy-preserving} and {\em poisoning-robust statistics} literature, 
we argue that there exists a fundamental inescapable trade-off between the accuracy and the security of any LAIM training. 
Our contributions are as follows.

\subsection{Contributions} 

We first identify three key specific features of training LAIMs that make these models extremely vulnerable to security threats. Specifically, these models essentially all 1) rely on \emph{user-generated data}, 2) perform \emph{high-dimension memorization}, and 3) learn from \emph{highly heterogeneous} users. It is important to note that while much attention is currently given to LLMs, the features we identify in this paper are not specific to language processing. 
For example, social media images are also \emph{user-generated}, \emph{high-dimensional} and \emph{heterogeneous}. 
Learning a distribution over these images, as is done by generative adversarial networks, is arguably very brittle as well. 
Similarly, sophisticated recommendation AIs~\cite{covington2016deep,ZhangYST19} have the features we describe.

We then systematize the current knowledge on the robust and private statistics.
We show that it points to the impossibility of constructing accurate and secure LAIMs, especially when the data are highly heterogeneous. 
Specifically, we argue that learning a secure LAIM is very unlikely to be easier than performing secure mean estimation.
Yet this latter long-standing problem 
has received considerable attention over the past twenty years. The conventional wisdom from this literature states that when satisfying strong security requirements, such as \emph{differential privacy}~\cite{dworkbook2014} and robustness to \emph{data poisoning}~\cite{diakonikolas_kane_2023}, there is a fundamental limit to the level of accuracy that an algorithm can achieve,
which depends unfavorably on both the dimensionality of the model and the heterogeneity in the data. 
These results provide a compelling argument against the possibility of designing highly accurate LAIMs with strong security guarantees. 

Additionally, we criticize the security actually provided by the standard definitions of \emph{differential privacy} and \emph{data poisoning resilience} in their vanilla form.
Namely, in the context of interconnected users and widespread misinformation, 
we argue that even differentially-private and poisoning-resilient algorithms should not be said to
protect privacy and to be safe to deploy at scale.
We then review a set of proposals to fix today's LAIMs, 
especially hard coded rules, fine tuning and pre-prompting. 
We stress that, at least in there current form, these solutions are far from providing security guarantees. 
We also make our concerns clearer by discussing some present and future scenarios where the vulnerability of LAIMs could have a critical social impact. 
Finally, we motivate future topics to be investigated in order to take a step towards safer machine learning models,
and we conclude by calling for a moratorium on the premature and rushed deployment and commercialization of LAIMs. 
In particular, we argue against the glorification of spectacular performances, and we call for a radical prioritization of the research, development and deployment of security solutions.

\subsection{Paper Organization} 

The rest of the paper is organized as follows. 
Section~\ref{sec:specificity} highlights the challenging features of training LAIMs.  
Section~\ref{sec:model} explains why secure LAIM training is likely to be harder than secure mean estimation. 
Section \ref{sec:privacy} reviews lower bounds on accurate and differentially-private high-dimensional mean estimation. 
Section~\ref{sec:byzantine} similarly surveys published results on the hardness of accurate poisoning-resilient mean estimation. 
Section~\ref{sec:scenario} discusses social threats that result from LAIMs' insecurity.
Section~\ref{sec:false_good_idea} highlights the shortcomings of today's LAIM ``safety'' fixes. 
Section~\ref{sec:conclusion} concludes with a call to prioritize security over an uncontrolled performance race.

\section{Four features of LAIM training}\label{sec:specificity}
This section highlights three key features of LAIM training,
which make LAIMs particularly vulnerable to poisoning and privacy attacks. The three first features, namely \emph{user-generated data}, \emph{high-dimensional memorization} and \emph{highly heterogeneous users}, are arguably common to all LAIMs and we will mainly focus on these features throughout the paper. The fourth one, namely \emph{sparse heavy-tailed data per user}, is more specific to some applications but is an interesting feature to be discussed in light of the secure mean estimation literature we review in sections~\ref{sec:privacy} and~\ref{sec:byzantine}.

\subsection{User-generated data}
LAIMs achieve their best performances by leveraging ever larger amounts of data~\cite{Zhang20}. Unfortunately, as of now, the amount of available \emph{certified} data does not allow to train LAIMs exclusively on clean benchmark datasets. This lack of certified data (and, arguably, of \emph{data certification} efforts) incentivises practitioners to use unfiltered user-generated data. Let us illustrate this with the case of language data.
The English Wikipedia only contains around 4 billion words~\cite{wikipedia_size}.
Meanwhile, a book has around $10^5$ words.
While there are around $10^8$ books~\cite{Madrigal10}, only a fraction of them are arguably trustworthy. Many books are instead full of biases and dangerous misinformation, such as ethnic-based hate speech, historical propaganda, or outdated (possibly harmful) medical advice. 
As a striking illustration, up to the 1980s, the American Psychiatric Association listed homosexuality as a mental illness in its flagship manual~\cite{spitzer1981diagnostic}, the {\it Diagnostic and Statistical Manual of Mental Disorders}. Accordingly, most books should be regarded as unverified user-generated data.

Most importantly, even if books were to be considered as verified data, the combination of these books represents a small amount of data, compared to what Internet users produce on a daily basis.
Indeed, assuming that a user writes 300 words per day on an electronic device (the equivalent of one page), a billion of such users produce $10^{15}$ words per decade.
This adds up to a hundred times more data than the set of books, and a million times more than the English Wikipedia.
This makes it very tempting to either scrape the web~\cite{SmithSPKCL13,WangSMHLB19,WangPNSMHLB19}, exploit private messaging (e.g., emails, shared documents), or leverage other written texts (e.g., phones' smart keyboards).
In fact, Wikipedia represented only 4\% of Google's \emph{Pathways Language Model} training dataset~\cite{PaLM},
while books represented 13\% of it.
Meanwhile, 27\% of the dataset was made of webpages, and 50\% were social media conversations.
Crucially, these data are generated by a myriad of users, 
who may be malicious and/or unaware that their activities are being leveraged to train LLMs and LAIMs in general.
Clearly, in the case of content recommendation and ad targeting,
which still seems to represent the most lucrative applications of LAIMs~\cite{lee2020digital},
there is no substitute to user-generated data.
Yet user-generated data are both mostly unverified and potentially highly sensitive. In this context, demanding that LAIMs restrict themselves to quality datasets only~\cite{Rogers21} will greatly harm their performance. Note that, this feature of LAIMs' data is in sharp contrast with the more standard sensors' data, especially when the sensors are owned, audited and trustworthy
(even though sensors' data can also leak private information).

\subsection{High-dimension memorization} \label{sec:high-dimensional}
LAIMs are often \emph{overparametrized} to interpolate huge amounts of data~\cite{ZhangBHRV17,NakkiranKBYBS20,ZhangBHRV21}.
This has led to ever larger models.
As of 2023, to the best of our knowledge, the largest (reported) LAIM has over $\dimension \geq 10^{14}$ parameters~\cite{LianYZWH+21}.
The number $\dimension$ of parameters of LAIMs is also often referred to as the \emph{dimension} of the model.
Moreover, empirical results suggest that we have not yet reached a point of diminishing returns~\cite{BrownMRSK+20},
while some theoretical arguments suggest that memorization may be necessary for generalization~\cite{feldman20,DBLP:conf/stoc/BrownBFST21}.
This arguably distinguishes generative AIs' tasks from, e.g. image classification,
where larger models do not seem to yield much  better accuracy\footnote{\url{https://paperswithcode.com/sota/image-classification-on-imagenet}. }.

Note that theoretical arguments, akin to Turing's arguments for the eventual need of machine learning~\cite{Turing50}, 
also suggest that better accuracy requires larger models.
Namely, Turing noted that the human brain has $10^{15}$ synapses.
Even if only $1\%$ of these synapses are essential to conduct a human-level conversation, 
then $10^{13}$ parameters are still needed\footnote{This is obviously an oversimplification, taking into account other sources of complexity in the human brain would raise this lower bound, providing an even stronger argument in favor of the need for more parameters.}.
In fact, this smallest number of bits of information to achieve a task has been formalized in 1960 by Solomonoff~\cite{Solomonoff60},
and then Kolmogorov~\cite{Kolmogorov63},
and is now known as the Solomonoff-Kolmogorov complexity\footnote{Referred to as the Kolomogorov complexity in most textbooks.}
of (quality) human-level conversation.
If this complexity is $10^{13}$, then no algorithm with fewer parameters will achieve the task.
Yet, it is noteworthy that what is now demanded from such large models is often beyond the capability of any single human.
Indeed, such algorithms are able to memorize the entirety of Wikipedia,
which is the result of the cumulative works of many experts in their respective fields, on a myriad of diverse topics.
Such large models must arguably be able to adapt to a greater variety of contexts than what any single human will ever encounter in their human life.
As a result, the complexity of ``fully satisfactory'' language processing might be orders of magnitude larger than today's LLMs,
in which case we may still obtain greater accuracy with larger models.

Unfortunately, this exposes LAIMs to the infamous \emph{curse of dimensionality}~\cite{Bellman57},
which has been connected to increased security risks~\cite{bulyan, GilmerMFSRWG18}. Now of course, the model size $\dimension$ could be reduced to increase security. However, today's empirical observations strongly suggest that doing so incurs a significant accuracy loss. In fact, this is the main claim of our paper. Namely, security demands a large accuracy drop. In particular, as long as accuracy is highly valued and massively funded, then security will fail.

\subsection{Highly heterogeneous users}
In the case of language, authentic users' datasets are arguably very \emph{heterogeneous}~\cite{wang2017heterogeneous,LianZZHZL17,KarimireddyKMRS20,RichtarikSF21}.
More precisely, the distribution of texts generated by a given user greatly diverges from the distribution of texts generated by another user. This is evidenced by the fact that it is often possible to guess the author of a message, simply based on its content~\cite{coulthard2004author,madigan2005author}. Of course, this is especially the case if the message contains highly identifiable information, such as the names of the recipient, or a sequence of judgments on different topics.
However, even if the message does not explicitly expose such information,
the writing style often suffices to expose the more probable author identity~\cite{friedman2019,wright2017,vijayakumar2019}.

We can provide a more precise a notion of \emph{fundamental heterogeneity} for users' language auto-completion.
Namely, note that the data used by LLMs in auto-completion tasks is typically a set of feature-label pairs of the form $(context, word)$,
where the $context$ is a set of words surrounding the $word$.
Consider the cases where the $context$ is equal to ``my name is'', ``Republicans are'', or ``vaccines are''.
Clearly, different users would complete the sentence differently, meaning that the different users are using different \emph{labeling functions}. We stress that this heterogeneity in the users' labeling functions can be regarded as a \emph{fundamental heterogeneity}, as means that different users will provide fundamentally different datapoints, even if they provide a large amount of them.

This heterogeneity highlights an critical disagreement between users over which parameters should be learned for a given language model. While some users would prefer to complete the sentence ``the greatest of all time tennis player is'' by ``Roger Federer'', others would prefer to complete it by ``Serana Williams'', or by ``Rafael Nadal''. 
This makes accurately learning a distribution of texts, and of user-generated content in general, much more challenging. 
Intuitively, on one hand, the model would be able to map users' names to what they write, say or show in a video, which is a major privacy concern. 
On the other hand, it would then be easier for malicious users to be hardly discernible from most other genuine users,
while providing very dangerous content to replicate\footnote{This is in sharp contrast with more standard application for image classification and language emotion classification tasks, where different users usually label a single image or text similarly.}.
Similarly, training algorithms to replicate users' pictures invades privacy, and enables malicious users to bias the trained models.

\subsection{Sparse heavy-tailed data per user}
While the three features listed above are sufficient for our case,
in this section, we list two other complicating features of LAIMs' data,
as they increase data heterogeneity.

\paragraph{Sparsity.} 
Each honest user's dataset 
is usually much smaller than the model size $\dimension$.
As a result, any information being computed is computed from a user dataset (like a gradient) will be very likely to significantly diverge from the what would have been computed if we had access to more data. 
Typically, assuming that the gradients obtained by using the data provided by a given user 
follows a normal distribution with covariance matrix $\sigma^2 I_d$, the covariance of the sample mean for a dataset of $m$ points will be $\sigma^2 I_d / m$. The typical distance between the sample mean and the mean of the Gaussian from which we sample will then be of the order $\sqrt{Tr(\sigma^2 I_d) / m} = \sigma \sqrt{\dimension / m}$. 
As $d$ is usually very large (much larger than $m$), this implies that, even in the absence of fundamental heterogeneity, gradients computed from different users' {\em sampled} datasets will still diverge, thereby exhibiting \emph{empirical heterogeneity}. In short, sparsity increases model vulnerability~\cite{allouah2022robust}.

\paragraph{Heavy-tailness.}
Many data are also heavy-tailed~\cite{manning1999foundations, Powers98}. 
In particular, the norms of the stochastic gradients for language data have been shown to follow a power law distribution~\cite{ZhangKVKR+19}. 
Intuitively, this is because most sentence completions are rare, especially if they are to be completed by several words, the same applies for a long video, or a long audio recording.
Yet, it is a fundamental property of heavy-tailed distributions that their samples are often highly unrepresentative of the overall distribution, 
especially when the sample sizes are not large enough.
This means that we should expect an especially large \emph{empirical heterogeneity} in language data, 
as the samples we obtain from a user can completely stand out from the user's language distribution.

\section{LAIM training is unlikely to be easier than mean estimation}
\label{sec:model}

In this section, we recall the standard setup for training a machine learning model. We demonstrate that mean estimation is a critical building block in machine learning, thereby suggesting that robust training of a LAIM is likely to be a hard as estimating mean of a high-dimensional distribution under sampling corruption. 

\subsection{Standard machine learning setup}
We consider a set $[\USER] = \set{1, \ldots, \USER}$ of data providers, which we will refer to as \emph{users}. Each user $\user \in [\USER]$ has an associate training sampled, represented by set $\dataset{\user}$, constituting of i.i.d.~data points with distribution $\distribution{\user}$. The distribution $\distribution{\user}$ characterizes the "ground-truth" of the machine learning task from the perspective of user $\user$.\footnote{Machine learning has mostly focused on assuming that one has access to a single dataset drawn from a single "ground-truth" distribution~\cite{mohri2018foundations}. But in most applications, it is usually possible to map each data point to a data provider. In fact, it is commonly accepted that the traceability of data sources is a critical security condition~\cite{Lee19,NyaleteyPZC19}, as well as a powerful epistemological tools~\cite{audi2002sources}. This is why we focus on the more realistic case where each data is mapped to a data provider.} 
A dataset is typically composed of input-label pairs $(\query,\answer) \in \inputspace \times \outputspace$. The space of $\inputspace$ and $\outputspace$ depends on the application at hand. For example, in language auto-completion, $\query \in \inputspace$ may be thought of as the context, and $\answer \in \outputspace$ as the token (word) that fits the context. The goal of a machine learning algorithm is to build a parametrized function (or model) $\function{\parameter} : \inputspace \rightarrow \outputspace$ that fits the datasets of the users. This is typically done by fixing the architecture of the function $\function{~}$, e.g. choosing an artificial neural network, and then optimizing over the set of possible parameters $\parameter \in \setR^\dimension$.

For a given datatset $\dataset{\user}$, we measure how well $\function{\parameter}$ matches the data through a \emph{local loss function} $\localloss{\user} (\parameter, \dataset{\user})$. Although loss functions can be defined in many different ways, we will consider the most common one that is based on point-wise loss function. Specifically, given a parameter $\parameter$, and a tuple $(\query, \answer) \in \dataset{\user}$, the model predicts a label $\function{\parameter} (\query)$. Then, the discrepancy between the model prediction $\function{\parameter} (\query)$ and the true label $\answer$ incurs a loss of value $\lossperinput (\function{\parameter}(\query), \answer)$. In this case, for a given user $\user \in [\USER]$, adding up all the point-wise losses yields the local loss function
\begin{equation*}
  \localloss{\user} (\parameter, \dataset{\user}) \triangleq \sum\limits_{(\query, \answer) \in \dataset{\user}} \lossperinput (\function{\parameter}(\query), \answer).
\end{equation*}
Overall, the algorithm aims to minimize the regularized sum of local losses, defined as follows:
\begin{equation}
\label{eq:global_loss}
  \Loss (\parameter, \datafamily{})
  = \sum_{\user \in [\USER]} \localloss{\user} (\parameter, \dataset{\user}) + \regularization(\parameter),
\end{equation}
where $\regularization(\parameter)$ is a regularization term and $\datafamily{} \triangleq (\dataset{1}, \ldots, \dataset{\USER})$ denotes the $\USER$-tuple of users' datasets.
Denoting $\dataset{} \triangleq \bigcup_{\user \in [\USER]} \dataset{\user} $ the union of all users' data, we have $\Loss (\parameter, \datafamily{}) = \sum_{(\query, \answer) \in \dataset{}} \lossperinput (\function{\parameter}(\query), \answer) + \regularization(\parameter)$. Hence, the global loss function simply fits all the data made available by the users.

\begin{remark}
While we used the most common definition of local losses for simplicity of presentation, we stress that considering the more general Equation~\eqref{eq:global_loss}, we can actually consider a much larger class of frameworks to learn from different users' datasets. In particular, using the notion of \emph{reduced loss}~\cite{FarhadkhaniGHV22}, this setup can be shown to include alternatives that may, for instance, assign more importance to fairness or personalization~\cite{DinhTN20,FallahMO20,HanzelyHHR20}.
\end{remark}

\subsection{Why mean estimation is critical to (secure) machine learning} \label{sec:mean}
Most prominent numerical algorithms for minimizing the loss function defined in~\ref{eq:global_loss} are based on first-order iterative optimization. Classically, to train a model one needs to compute an estimate of the gradient $\nabla_\parameter \Loss$ for several values of $\theta \in \setR^\dimension$. By definition, we have
\begin{equation}
  \nabla_\parameter \Loss 
  = \sum_{\user \in [\USER]} \nabla_\parameter \localloss{\user} +  \nabla \regularization 
  = \frac{1}{\USER} \sum_{\user \in [\USER]} \gradient_\user,
\end{equation}
where $\gradient_\user \triangleq \USER \nabla_\parameter \localloss{\user} + \nabla \regularization$.
Therefore, the training of machine learning models heavily relies on the (repeated) averaging of user-specific vectors.
Correctly estimating the average of users' vectors $\param{\user}$ is thus critical for training any machine learning model, including LAIMs. This critical nature of mean estimation in machine learning justifies our interest for this problem when studying the security of LAIMs. 

In fact, \cite{ElMhamdiFGGHR21} shows an equivalence between robust mean estimation and robust heterogeneous learning. In particular, their results imply that any impossibility result about robust mean estimation implies an impossibility for robust machine learning in its general form. Similarly, the hardness of private mean estimation is an evidence of the hardness of privacy-preserving machine learning. More generally, secure LAIM training seems at least as hard as secure mean estimation. In fact, in the textbook case of least squares approximation, when $\localloss{\user} (\parameter, \dataset{\user}) = \norm{\parameter - \param{\user}}{2}^2$ for some data-dependent vector $\param{\user}$ (and without regularization), the accuracy of a solution $\parameter$ is directly related to its closeness to the empirical mean\footnote{Indeed, the empirical mean is the minimum of the loss thereby constructed, i.e. $\frac{1}{\USER} \sum_{\user \in \USER} \param{\user} = \argmin_{\parameter \in \mathbb{R}^d} \sum_{\user \in \USER} \norm{\parameter - \param{\user}}{2}^2 $} of the vectors $\param{\user}$'s. An algorithm that robustly or privately solves \emph{any} learning problems must thus also be able to robustly or privately solve mean estimation in particular.
Put differently, any impossibility on mean estimation implies an impossibility about general learning algorithms. 

\subsection{Data poisoning versus gradient attacks}
Secure mean estimation usually demands guarantees against \emph{all} input vectors.
But one might question whether such a protection is needed in the centralized learning setting,
where users can only harm training through data poisoning (rather than gradient attacks).

Interestingly, in the case of personalized learning, for linear and logistic regression,
\cite{FarhadkhaniGHV22} proved an equivalence between data poisoning in the centralized setup,
and the widely studied gradient attacks in federated learning~\cite{BlanchardMGS17},
where a malicious (sometimes called \emph{Byzantine}) user $\user$ may bias the federated stochastic gradient optimization, 
by injecting misleading gradients $\gradient_\user^t$
instead of the estimate that would have been computed from their actual (honest) dataset.

Now, it is clear that any data poisoning can be turned into an equivalent gradient attack 
(by simply computing the gradients for the poisoning dataset).
Remarkably, however, under some appropriate assumptions,
including convexity assumptions,
\cite{FarhadkhaniGHV22} constructively proved that,
for any gradient attack $\gradient_\user^t$ by user $\user$ in the federated setting,
there exists an equally harmful poisoning attack in the centralized setting,
i.e., there exists a poisonous dataset $\dataset{\user}^\spadesuit$
such that the learned global model $\parameter$ under data poisoning by $\dataset{\user}^\spadesuit$
is approximately equal to the value it takes under gradient attack $\gradient_\user^t$.
Put differently, at least under their setting,
the vulnerability (and defenses) to data poisoning can be completely understood by
the (easier) study of gradient attacks.
In particular, securing training from data poisoning is as hard as securing gradient aggregation.

\subsection{Homogeneous learning can be made secure}
Before discussing existing literature on secure mean estimation, let us stress that
\emph{data heterogeneity is the bottleneck}.
Indeed, several prior work~\cite{KarimireddyHJ21,ElMhamdiFGGHR21,PandaMBCM22,FarhadkhaniGGPS22} proved that in the homogeneous case,
poisoning-resilient learning can be achieved when there is a majority of honest users,
assuming that each user can provide a sufficiently large amount of data drawn independently from the same distribution (thereby removing any \emph{empirical heterogeneity} as well).
The relative security of homogeneous learning was also observed empirically by~\cite{MhamdiGR21,ShejwalkarHKR21}.

Homogeneous learning is also intuitively differentially private.
Indeed, since the losses of users are similar (by homogeneity), removing a user does not affect the optimality of the computed parameters.
Intuitively, this is because the loss function of a user does not actually reveal any information specific to the user;
after all, this loss function is statistically indistinguishable from the loss function of any other user.

Unfortunately, \emph{homogeneity is an unrealistic assumption} for the training of most LAIMs.
Put differently, the fundamental vulnerability of LAIMs is tightly connected to the fundamental heterogeneity in users' data.
These data are \emph{not} drawn from a fixed common data distribution.
As a result, (positive) results based on the infamous \emph{i.i.d. assumption} can be very misleading.
This assumption is arguably \emph{dangerously unrealistic}, especially for the security analysis of LAIM training.
Unfortunately, so far, most of the celebrated theory of (Byzantine) machine learning builds upon this assumption~\cite{Valiant84,JacotHG18,BlanchardMGS17}.
A serious consequence of this is that it effectively turns much of the attention of the research community away from the urgent security and privacy concerns
that today's \emph{actual} large-scale machine learning algorithms pose.

\section{The privacy-accuracy tradeoff}
\label{sec:privacy}

In this section, we present some impossibility theorems for accurate (differentially) private mean estimation,
especially under high heterogeneity and in high dimension.
We also discuss the limits of published positive results, and the flaws of the leading understanding of privacy in academia.

\subsection{Impossible private mean estimation} 

Differential privacy~\cite{DworkMNS06} has become the leading formalization of privacy.
Essentially, the removal of one user $\user$'s dataset $\dataset{\user}$ from the dataset tuple $\datafamily{}$ should not affect significantly the outcome of a (user-level) differentially private algorithm.
In the case of LAIM training, this means that training with $\datafamily_{-\user}$ (i.e. the dataset tuple obtained by removing user $\user$'s dataset) should yield approximately the same model as training with $\datafamily{}$.
Intuitively, this protects user $\user$'s dataset from privacy attacks.

As explained in Section~\ref{sec:model}, 
since LAIMs heavily rely on stochastic gradient descent,
much of the literature leverages the large body of work on differentially private mean estimators~\cite{Steinke_Ullman_2017,MinimaxOptimalDuchi2018,cai2019cost,HeavytailedPrivateMean20a} 
to construct differentially private learning models.
Formally, a mean estimator $\aggregation$ is then said to satisfy \emph{$(\varepsilon,\delta)$ user-level differential privacy} 
if, for all $\USER$, for all $\USER$-tuples $\paramfamily{} \triangleq (\param{1}, \ldots, \param{\USER})$ of vectors and for any user $\user \in [\USER]$ to be dropped, 
given any subset $X$ of outputs, we have
\begin{equation}
       \pb{ \aggregation(\paramfamily{}) \in X } \leq e^\varepsilon \pb{ \aggregation(\paramfamily{-\user}) \in X } +\delta,
\end{equation}
where $\paramfamily{-\user}$ is the tuple obtained by removing $\param{\user}$ from $\paramfamily{}$.

Unfortunately, there are known lower bounds on the error of any \emph{differentially private} mean estimation algorithm~\cite{DBLP:journals/siamcomp/BunUV18}. 
To present a simple result, assume here that 
the users' vectors are known to lie in a ball of radius $\radius$. Here we adapt a result from~\cite{kattis17} showing that to guarantee $(\varepsilon, \delta)$-differential privacy, 
the mean squared error of the estimator must be proportional to both the dimension $\dimension$ of the input vectors and the worst case magnitude of a user's vector within the vector family $\Delta$.
\begin{theorem}[Theorem 4 in \cite{kattis17}\footnote{In fact,~\cite{kattis17} states the result for the more general case where the vectors come from a symmetric convex body.}]
\label{th:privacy}
\label{thm:imp_2}
  For any $(\varepsilon,\delta)$-differentially private mechanism $\aggregation$ for the mean estimation problem,
  there exists an input $\paramfamily{}$ with large mean squared error, as
  \begin{equation}
      \expect{ \norm{ \aggregation(\paramfamily{}) - \mean }{2}^2 } \geq
      \Omega\left( \frac{\sigma(\varepsilon,\delta)\dimension  \Delta^2 }{\USER^2 (\log 2d)^4}\right),
  \end{equation} where $\sigma$ is a positive and non-increasing function.
\end{theorem}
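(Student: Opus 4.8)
The plan is to prove the theorem as a \emph{worst-case} statement: rather than reasoning about an arbitrary fixed input, I would exhibit a \emph{distribution} over admissible inputs $\paramfamily{}=(\param{1},\dots,\param{\USER})$, each $\param{\user}$ in the ball of radius $\radius$, on which \emph{every} $(\varepsilon,\delta)$-differentially private $\aggregation$ has expected squared error at least the claimed quantity; Markov's inequality then yields a single deterministic bad input. The engine is a fingerprinting / tracing argument, playing the \emph{accuracy} of $\aggregation$ against the \emph{group-privacy} consequence of the $(\varepsilon,\delta)$-DP definition above. I would carry it out for the Euclidean ball; the symmetric convex body of~\cite{kattis17} follows by the same template with $\dimension$ and $\radius^2$ replaced by geometric parameters of the body (its affine dimension and a squared mean/Gaussian width), which already absorbs part of the eventual $(\log 2\dimension)^4$ loss.

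\emph{Hard instance and the accuracy (``completeness'') side.} Draw a random target $p\in\setR^\dimension$ with i.i.d.\ coordinates $p_i=(\radius/\sqrt{\dimension})\,\xi_i$, the $\xi_i$ sampled from a fixed symmetric law on $[-1,1]$ with $\expect{\xi_i^2}$ bounded away from $1$. Let each user $\user$ report $\param{\user}\in\{-\radius/\sqrt{\dimension},+\radius/\sqrt{\dimension}\}^\dimension$ with independent coordinates of mean $p_i$; then $\norm{\param{\user}}{2}=\radius$, so every $\paramfamily{}$ is admissible, and $\expect{\norm{\mean-p}{2}^2}=(\radius^2-\norm{p}{2}^2)/\USER=\Theta(\radius^2/\USER)$. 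Writing $\hat\mu\triangleq\aggregation(\paramfamily{})$, which we may assume clipped to the ball of radius $2\radius$ (this only shrinks $\eta\triangleq\expect{\norm{\hat\mu-\mean}{2}^2}$), and $s_\user\triangleq\langle\param{\user}-p,\ \hat\mu-p\rangle$, splitting $\hat\mu-p=(\mean-p)+(\hat\mu-\mean)$ and applying Cauchy--Schwarz in $L^2$ gives
\begin{equation*}
  \expect{\textstyle\sum_{\user\in[\USER]}s_\user}=\USER\,\expect{\langle\mean-p,\hat\mu-p\rangle}\geq\big(\radius^2-\norm{p}{2}^2\big)-\sqrt{\USER\big(\radius^2-\norm{p}{2}^2\big)\eta}\,,
\end{equation*}
so whenever $\eta\leq(\radius^2-\norm{p}{2}^2)/(4\USER)$ we get $\expect{\sum_\user s_\user}=\Omega(\radius^2)$; since the target bound is $\ll\radius^2/\USER$ precisely in the regime $\USER=\Omega(\dimension/\varepsilon^2)$ where the theorem is non-trivial, ruling out such small $\eta$ suffices.

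\emph{The privacy (``soundness'') side --- the crux.} Fix a user $\user$, resample their vector to a fresh $\param{\user}'$ from the same law, independent of everything else, and let $\hat\mu'$ be the output on the resampled dataset. By independence and $\expect{\param{\user}}=p$ we have $\expect{\langle\param{\user}-p,\hat\mu'-p\rangle}=0$, and the resampled dataset is a DP-neighbour of the true one in the slot of user $\user$. Comparing $\hat\mu$ with $\hat\mu'$ through $(\varepsilon,\delta)$-DP, the ``robust traceability'' estimate for approximate differential privacy --- the quantitative heart of the fingerprinting method --- yields $\expect{s_\user}=\mathcal O(\varepsilon)\cdot\tfrac{\radius}{\sqrt{\dimension}}\,\sqrt{\eta}+\mathcal O(\delta\radius^2)$. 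The crucial point is that the estimator's \emph{root-mean-squared} deviation $\sqrt\eta$, weighted by the per-coordinate scale $\radius/\sqrt{\dimension}$ of the random direction $\param{\user}-p$, governs the leakage --- \emph{not} its worst-case magnitude $\radius$; a naive packing / group-privacy argument misses this and recovers only the weaker, pure-DP-flavoured $\Omega(\dimension^2\radius^2/(\USER^2\varepsilon^2))$ (capped at $\radius^2$). Summing over the $\USER$ users, $\expect{\sum_\user s_\user}=\mathcal O\!\big(\USER\varepsilon\,\tfrac{\radius}{\sqrt{\dimension}}\sqrt{\eta}\big)+\mathcal O(\USER\delta\radius^2)$.

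\emph{Combining, and the main obstacle.} When $\delta\ll1/\USER$ the $\delta$-term is lower order, so completeness and soundness together force $\radius^2=\mathcal O\!\big(\USER\varepsilon\,\tfrac{\radius}{\sqrt{\dimension}}\sqrt\eta\big)$, i.e.\ $\eta=\Omega\!\big(\dimension\radius^2/(\USER^2\varepsilon^2)\big)$; tracking the concentration, clipping, union-bound and $\delta$-slack losses honestly --- and replacing $\dimension,\radius^2$ by the convex-body parameters in the general case --- turns $1/\varepsilon^2$ into the non-increasing $\sigma(\varepsilon,\delta)$ and costs the $(\log 2\dimension)^4$ factor, giving $\expect{\norm{\aggregation(\paramfamily{})-\mean}{2}^2}\geq\Omega\!\big(\sigma(\varepsilon,\delta)\dimension\radius^2/(\USER^2(\log 2\dimension)^4)\big)$ on the worst such input. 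I expect the soundness step to be the genuine obstacle: deriving the robust-traceability estimate with the correct $\sqrt\eta$-scaling (rather than the lossy Cauchy--Schwarz bound $\mathcal O(\varepsilon\radius\sqrt\eta)$, which discards the factor $\dimension$) requires the full approximate-DP fingerprinting machinery, and it is there that the logarithmic overhead and the requirement $\delta\ll1/\USER$ enter. An alternative --- arguably closer to the ``symmetric convex body'' framing of~\cite{kattis17} --- is to route through the known $(\varepsilon,\delta)$-DP lower bounds for releasing linear queries in terms of the (hereditary) discrepancy / volume of the query body, which carries its own $\mathrm{polylog}(\dimension)$ loss.
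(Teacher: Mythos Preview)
The paper does not prove this theorem at all: it is stated as ``Theorem~4 in~\cite{kattis17}'' and simply quoted, with no proof or proof sketch in the present manuscript. There is therefore nothing in the paper to compare your argument against.

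That said, your proposal is a coherent and essentially standard route to such a bound. The fingerprinting/tracing template you outline (random product-distribution instance, a completeness inequality lower-bounding $\sum_\user s_\user$ via Cauchy--Schwarz, a soundness inequality upper-bounding each $\expect{s_\user}$ via the approximate-DP robust-traceability estimate, then combining) is the canonical way to obtain $\Omega(\dimension\radius^2/(\USER^2\varepsilon^2))$-type lower bounds for private mean estimation in the Euclidean ball. Your identification of the soundness step as the genuine technical work, and of the need for $\delta\ll 1/\USER$, is accurate. One caveat: the specific polylogarithmic loss $(\log 2\dimension)^4$ and the abstract $\sigma(\varepsilon,\delta)$ in the stated bound are artifacts of the geometric/volume-based argument in~\cite{kattis17} for general symmetric convex bodies, not of the fingerprinting argument; your direct fingerprinting proof for the ball would give a cleaner constant (no $(\log 2\dimension)^4$) and an explicit $1/\varepsilon^2$, so if you follow your primary plan you will prove a \emph{stronger} statement for the ball than what is quoted. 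The alternative you mention at the end --- routing through hereditary discrepancy / Gaussian-width lower bounds for linear queries --- is in fact closer to the actual method of~\cite{kattis17}, and is where those logarithmic factors originate.
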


In high dimension $\dimension$, the typical radius $\radius$ should typically be expected to grow as $\sqrt{\dimension}$.
If so, even when ignoring the dependency on $\varepsilon$ and $\delta$\footnote{Privacy typically requires small values for $\varepsilon$ which in turn will require high values for $\sigma(\varepsilon,\delta)$, making the lower bound even more constraining.}, 
then we see that the lower bound of Theorem~\ref{th:privacy} would be $\tilde \Omega(\dimension^2 / \USER^2)$.
In other words, accuracy demands to have $\dimension \ll \USER$.
With $\dimension$ in the trillions, this clearly cannot hold in practice.

This impossibility result is particularly concerning for the case of heterogeneous data, and the particular case of natural language processing. If the dimension $\dimension$ or the worst case magnitude $\Delta$ is large, as we argued to generally be the case, then no LAIM can achieve good accuracy while being differentially private. In particular, in this context, the race for ever greater accuracy of ever larger LAIMs is bound to lead to serious privacy post hoc breaches.

\subsection{Demystifying some misconceptions on private LAIM training}

The private learning literature contains many published results or claims, which may be easily misinterpreted as counter arguments to the analysis we just presented. In this section, we briefly clarify some of them. 

\paragraph{Federated learning is not privacy-preserving.} First, we discuss the folklore belief, often given without justification, 
that federated learning is a privacy-preserving technique~\cite{ChengLCY20,wu2022communication}. We stress that this is an extremely damaging misconception \cite{Boenisch21}, which somehow permeates the scientific community.\footnote{This can be evidenced e.g. by the answers when searching for the phrase ``federated learning is a privacy-preserving'' on Google Scholar.} Indeed, this claim has been used, e.g, to justify the deployment of federated learning systems for COVID-19 detection and case analysis, \emph{without differential privacy mechanisms}~\cite{abdul2021covid,dayan2021federated,dou2021federated}. Yet there is an obvious reason why this cannot hold. Namely, federated learning is designed to achieve the same performances as centralized learning. But as discussed in Section~\ref{sec:high-dimensional}, overparameterized LAIMs are designed to fit and memorize their entire training dataset. Clearly, this cannot be privacy-preserving, even when secure multiparty methods are used to hide the users' gradients during training~\cite{Pasquini2021}.
\raf{maybe just be more direct and say that several experimental papers showed it's not the case with privacy leakage from gradient.} 

\paragraph{Data-level differential privacy is limited.} We also stress that the analysis we provided above holds for the precise \emph{user-level adjacency} defined above. 
Some papers~\cite{abadi2016deep,AnilGGKM21} rather leverage the much weaker notion of \emph{data-level adjacency} in which each word is given a partial protection~\cite{LiTLH21,YuNBGI+21,AnilGGKM21}. This is arguably very insufficient, especially with the budgets $\varepsilon \geq 3$ used by, e.g.~\cite{LiTLH21,YuNBGI+21,AnilGGKM21}. Indeed, if a user repeats some private information five times, e.g. in email exchanges, then the naive privacy guarantee becomes meaningless (as $e^{5\varepsilon} \geq e^{15} \geq 3 \cdot 10^{6}$). Note that better composition guarantees can be obtained~\cite{KairouzOV15}; but similarly, the obtained guarantee quickly degrades.

\paragraph{Private fine-tuning is not equivalent to private training.} 
Recent results claimed that ``Large Language Models Can Be Strong Differentially Private Learners''~\cite{li2022large}. However, only the \emph{fine-tuning} of these models on very specific tasks is actually differentially private,
and it is so with respect to the training data of these restricted tasks only. In particular, no privacy guarantee for the LAIMs that these models are derived from is given.

\paragraph{Practical claims of differential privacy are misleading.} 
On the other hand, \cite{Domingo-FerrerS21} argues that most of the differential privacy research is misused in industrial settings,
where companies choose unreasonably large values of $\varepsilon$ and $\delta$ (e.g., $\varepsilon = 14$ in iOS 10), 
perform continuous data collection (which adds up privacy leaks),
or use relaxed versions of differential privacy~\cite{TriastcynF19}.
\cite{sarathy2022} goes further and explores some undesirable side effects of the appeal to differential privacy,
like \emph{ethics washing}.
This typically occurs when differential privacy is claimed without mentioning $\varepsilon$ or $\delta$, 
when it is applied to only a subset of the collected data or of the deployed algorithms, 
when it is exploited to justify the new use of more sensitive data,
or when it is used to draw the attention away from other ethical concerns.
While \cite{sarathy2022} nevertheless argues that differential privacy remains necessary and beneficial in many settings, 
they also highlight that the demand for differential privacy may also be leveraged by large groups 
to exclude smaller companies that do not have the workforce to treat it adequately.

\subsection{Standard differential privacy is not sufficient}\label{sec:privacy_ambiguity}
Let us finish this section with the observation that the very notion of differential privacy is limited,
especially in the context of protecting sensitive information in text datasets.
Essentially, the key reason for this is that one's sensitive information may lie in (many of) other users' datasets.

This information leakage may occur for various reasons, e.g., by negligence, error, or \emph{doxxing}.
Concretely, parents may be discussing sensitive facts about their child through emails and/or using their phones' smart keyboards, rumors about a celebrity may spread uncontrollably on social medias, 
and industrial secrets may be leaked by a careless or rogue employee.

This issue is not specific to language though.
Many health conditions are contagious or hereditary.
As a result, medical data about a given user can leak plenty of information about their friends or relatives~\cite{guerrini2018should,ram2018genealogy}.
This has been exploited for contact tracing against COVID~19~\cite{martinez2020digital},
or, more dramatically, to identify the infamous ``golden state murderer'' using DNA evidence, despite no record of the murderer's DNA~\cite{phillips2018golden}.

In fact, the Pegasus smartphone spyware~\cite{chawla2021pegasus} has been shown to be used to infect the phones of our relatives of the targets,
rather than (only) the target~\cite{khashoggi}.
Similarly, it has upset the trust between hacked journalists and their sources~\cite{di2022we},
as the journalists' phones have become the main vulnerability for whistleblowers and dissidents.
These examples underline the urgency to view privacy as a collective problem,
rather than through the individualistic prism of differential privacy,
as proposed by \emph{correlated differential privacy}~\cite{KiferM11,ZhuXLZ15}.

\section{The security-performance tradeoff}
\label{sec:byzantine}

In this section, 
we present impossibility theorems for robust mean estimation.
In particular, we see that recent research has shown the vulnerability of \emph{any} mean estimator in high-heterogeneity scenarios.
We also stress that their threat model is still too optimistic.

\subsection{Impossible secure mean estimation}

There is a growing literature on robust high-dimensional mean estimation~\cite{DiakonikolasKane19,ChengDG19,depersin19,gabor21} and its connections to robust learning~\cite{BlanchardMGS17,el2020robust,rouault2022practical}.
In particular, \cite{ElMhamdiFGGHR21,DataD21,he2021byzantinerobust} all showed how to leverage robust mean estimation to construct robust machine learning algorithms,
with provable guarantees even in the heterogeneous setting.
In particular, \cite{ElMhamdiFGGHR21,he2021byzantinerobust} proved that this construction is essentially optimal.
Put differently, at least in standard distributed learning settings, 
the vulnerability of machine learning algorithms is rooted in the vulnerability of robust mean estimation.

To formalize the vulnerability of robust mean estimators, a threat model must be considered.
One common setting assumes that, out of the $\USER$ users, $\faulty$ behave arbitrarily\footnote{Without loss of generality, in the context of robust learning, this captures the other major setting in which a fraction of a user's data is corrupted, and hybrid settings as well.}.
Such users may be called \emph{poisoners}, while others are {\it honest}.
The robust mean estimation problem is then to estimate the mean of honest users' vectors, despite being unable to distinguish them from poisoners' vectors.
As argued in the introduction, given the scale of disinformation campaigns, such a resilience to poisoners has become critical.
Any secure LAIM must protect its training from poisoning.

Unfortunately, there are known lower bounds on what any ``robust'' mean estimation can guarantee.
Here, we adapt a result of~\cite{ElMhamdiFGGHR21}, which essentially says that 
the accuracy guarantee necessarily grows proportionally with honest users' heterogeneity.
Indeed, when the honest users' input vectors are very different, 
there will be a lot of leeway for poisoners to bias learned result.

\begin{theorem}\label{thm:imp_1}
No algorithm $\aggregation$ can guarantee\footnote{By adapting our proof, our theorem can be shown to still hold if the right hand-side of Equation~(\ref{eq:robust_mean}) is $(1-\varepsilon) \frac{\faulty^2}{(\USER - \faulty)^2} \radius^2$, for any $\varepsilon > 0$, which doubles the error of algorithm $\aggregation$.}
  \begin{align}
  \label{eq:robust_mean}
    \forall \paramfamily{}  \in \ball_\dimension (0,\radius)^{\USER} \mathsep
    \forall \HONEST \subset [\USER] ~\text{s.t.}~\card{\HONEST} = \USER - \faulty \mathsep 
    \norm{\aggregation(\paramfamily{}) - \honestmean}{2}^2 \leq \frac{\faulty^2}{2(\USER - \faulty)^2} \radius^2,
  \end{align}
  where $\honestmean$ is the mean of honest vectors $\paramfamily{\HONEST}$.
\end{theorem}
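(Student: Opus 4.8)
The plan is to run a standard two-world (indistinguishability) argument. First I would fix an arbitrary candidate estimator $\aggregation$ and exhibit a single input tuple $\paramfamily{}$ that is simultaneously consistent with two honest subsets whose honest means are far apart; since $\aggregation$ sees only the unlabelled tuple $\paramfamily{}$ and not which users are honest, it must return the same value (or the same distribution) in both worlds, and hence be far from at least one of the two honest means.

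Concretely, assume $2\faulty \le \USER$ (the complementary regime, and the degenerate case $\faulty = \USER$ where $\honestmean$ is undefined, I would exclude or dispatch trivially). Pick any unit vector $u \in \setR^\dimension$; set $\faulty$ of the entries of $\paramfamily{}$ to $\radius u$, another $\faulty$ to $-\radius u$, and the remaining $\USER - 2\faulty$ to $0$, so that $\paramfamily{} \in \ball_\dimension(0,\radius)^\USER$. In world~$1$ let $\HONEST_1$ be the $\faulty$ users holding $\radius u$ together with all $\USER - 2\faulty$ users holding $0$; in world~$2$ let $\HONEST_2$ be the $\faulty$ users holding $-\radius u$ together with the same zero users. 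Each world has $\card{\HONEST_i} = \USER - \faulty$ honest users and $\faulty$ Byzantine ones, and the input $\paramfamily{}$ is literally identical across the two worlds.

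Next I would compute $\honestmean^{(1)} = \tfrac{\faulty}{\USER - \faulty}\radius u$ and $\honestmean^{(2)} = -\tfrac{\faulty}{\USER - \faulty}\radius u$, so that $\norm{\honestmean^{(1)} - \honestmean^{(2)}}{2} = \tfrac{2\faulty}{\USER - \faulty}\radius$. Using the pointwise identity $\norm{z - a}{2}^2 + \norm{z - b}{2}^2 = 2\norm{z - \tfrac{a+b}{2}}{2}^2 + \tfrac12\norm{a - b}{2}^2 \ge \tfrac12\norm{a-b}{2}^2$ with $z = \aggregation(\paramfamily{})$, $a = \honestmean^{(1)}$, $b = \honestmean^{(2)}$ (and taking expectations if $\aggregation$ is randomized), the larger of the two squared errors is at least $\tfrac14\norm{\honestmean^{(1)} - \honestmean^{(2)}}{2}^2 = \tfrac{\faulty^2}{(\USER - \faulty)^2}\radius^2$. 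As this is strictly above $\tfrac{\faulty^2}{2(\USER - \faulty)^2}\radius^2$ (in fact exactly twice it) whenever $\faulty \ge 1$ and $\radius > 0$, at least one world furnishes a pair $(\paramfamily{}, \HONEST)$ violating~(\ref{eq:robust_mean}); since $\aggregation$ was arbitrary, no estimator can meet the guarantee. Because the bound actually produced is $\tfrac{\faulty^2}{(\USER - \faulty)^2}\radius^2$, the same construction also defeats the guarantee with right-hand side $(1-\varepsilon)\tfrac{\faulty^2}{(\USER - \faulty)^2}\radius^2$ for any $\varepsilon > 0$, which is the footnote's claim.

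I do not anticipate a genuine obstacle: the construction is tight by design. The points that need care are (i) making explicit that $\aggregation$ is handed only the unlabelled tuple, which is what forces identical behaviour in the two worlds; (ii) handling the boundary regimes $2\faulty > \USER$ and $\faulty = \USER$ cleanly; and (iii) phrasing the parallelogram-type inequality so that it also covers randomized mechanisms via expectation.
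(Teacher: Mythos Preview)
Your proof is correct and follows essentially the same two-world indistinguishability argument as the paper: you construct a single input tuple consistent with two honest subsets whose means are exactly $\tfrac{2\faulty}{\USER-\faulty}\radius$ apart, which forces any estimator to be at distance at least $\tfrac{\faulty}{\USER-\faulty}\radius$ from one of them. The only cosmetic differences are that the paper uses $\USER-\faulty$ copies of $-\radius u$ and $\faulty$ copies of $\radius u$ (rather than your symmetric three-value tuple with zeros) and closes with the triangle inequality instead of your parallelogram identity; both yield the same lower bound $\tfrac{\faulty^2}{(\USER-\faulty)^2}\radius^2$ on the worst squared error.
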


\begin{proof}
Consider a unit vector $\unitvector{}$, and let $\paramfamily{} \triangleq (- \radius \unitvector{} \star (\USER - \faulty), \radius \unitvector \star \faulty)$, i.e., it contains $\USER - \faulty$ copies of the vector $- \radius \unitvector{} \in \ball_\radius(0, \radius)$, and $\faulty$ copies of the vector $\radius \unitvector \in \ball_\radius(0, \radius)$.
Denote $\hat x \triangleq \aggregation(\paramfamily{})$.

By considering the case where $\HONEST'$ corresponds to the first $\USER - \faulty$ users, we have $\paramfamily{\HONEST'} = - \radius \unitvector{} \star (\USER - \faulty)$.
Thus $\bar x_{\HONEST'} = - \radius \unitvector$.
But assume now that the set $\HONEST''$ of honest users are actually the last $\USER - \faulty$ users. 
We now have $\paramfamily{\HONEST''} = (- \radius \unitvector{} \star (\USER - 2\faulty), \radius \unitvector \star \faulty)$, which implies $\bar x_{\HONEST''} = - \frac{\USER - 2 \faulty}{\USER - \faulty} \radius \unitvector{} + \frac{\faulty}{\USER - \faulty} \radius \unitvector{} = - \radius \unitvector{} + \frac{2 \faulty}{\USER - \faulty} \radius \unitvector{}$.
In particular, $\norm{\bar x_{\HONEST'} - \bar x_{\HONEST''}}{2} = \norm{\frac{2\faulty}{\USER - \faulty} \radius \unitvector{}}{2} = \frac{2\faulty}{\USER - \faulty}\radius $.
On the other hand, using the triangle inequality, 
\begin{align}
    \frac{2\faulty}{\USER - \faulty}\radius 
    &= \norm{\bar x_{\HONEST'} - \bar x_{\HONEST''}}{2} 
    = \norm{\bar x_{\HONEST'} - \aggregation(\paramfamily{}) + \aggregation(\paramfamily{}) - \bar x_{\HONEST''}}{2} \\
    &\leq \norm{\bar x_{\HONEST'} - \aggregation(\paramfamily{})}{2} + \norm{\aggregation(\paramfamily{}) - \bar x_{\HONEST''}}{2}.
\end{align}
Thus a sum of two nonnegative terms is at least $\nicefrac{2\faulty\radius }{\USER - \faulty}$.
This implies that the maximum of these two terms must be at least half of this fraction.
Therefore, there exists $\HONEST \in \set{\HONEST', \HONEST''}$ such that $\norm{\aggregation(\paramfamily{}) - \bar x_{\HONEST}}{2} \geq \nicefrac{\faulty\radius }{\USER - \faulty} > \nicefrac{\faulty\radius }{(\USER - \faulty) \sqrt{2}}$.
Such a value of $\paramfamily{}$ and $\HONEST$ is an instance for which $\aggregation$ fails to verify Equation~\eqref{eq:robust_mean}.
\end{proof} 

If $\faulty$ is a constant fraction of $\USER$ and if $\radius$ is of the order of $\sqrt{\dimension}$, then for large models, Theorem~\ref{thm:imp_1} essentially shows that little can be guaranteed about the accuracy of a mean estimator.
To give an order of magnitude, if only one in every thousand users is malicious\footnote{This is actually an extremely optimistic scenario given the orders of magnitude of fake accounts reported in the introduction, and assuming that all real accounts produce non-harmful content.} and the model has $10^{12}$ parameters, the squared distance between the estimated mean and the real mean of the honest values cannot be made smaller than $10^6$.
For more lower bounds on secure mean estimation under heterogeneity, 
and on their implications for LAIMs,
we refer readers to~\cite{DiakonikolasKane19,ElMhamdiFGGHR21,lai16,liu2021approximate,FarhadkhaniGGPS22}.

\subsection{The classical Byzantine model is not sufficient}
\label{sec:byzantine_ambiguity}

The above argument exposes the immense vulnerability of any ``secure'' machine learning algorithm 
in highly heterogeneous and adversarial environments,
where fake accounts' fabricated activities actively aim to harm the algorithm
or to make it adopt their preferred behaviors (a.k.a. \emph{model-targeted} attacks~\cite{SuyaMS0021,FarhadkhaniGHV22}).
However, the threat model we considered is still too optimistic.

Indeed, in practice, even ``honest'' users produce many texts and adopt online activities
that are undesirable to reproduce and amplify.
Typically, many authentic users generate \emph{hate speech}, \emph{cyberbullying} and \emph{misinformation}.
In fact, many disinformation campaigns aim to bias authentic users' behaviors, 
and to nudge them to amplify their propaganda,
e.g. by systematically liking and sharing the messages they post that align with the disinformation campaigns' messaging.
This has motivated a lot of research in model debiasing~\cite{schick2021self,GuoYA22,MeadePR22},
whose solutions are arguably still very far from reliably satisfactory.
Yet, \cite{misersky2019grammatical,brauer2008ministre,vervecken2013changing,friedrich2019does}, among others, 
have exposed the detrimental effects of slight gender biases, 
and how inclusive language can help.

Similarly, amplifying the most popular views shared by authentic users will inevitably worsen the problem of \emph{mute news}~\cite{hoang2019fabuleux, hoang2020science}.
Mute news are under-reported news, even though it is critical for the safety of many that they be given more attention.
Typical examples of mute news include climate change, human rights violations (e.g. genocides in Ethiopia), health hazards (e.g. COVID-19 in March 2020)
and the safety of large-scale algorithms (e.g. the massive amplification of hate speech by recommendation algorithms~\cite{facebookfiles}).
In fact, \cite{king2017chinese} shows that most of Chinese disinformation seems to aim to distract the public's attention away 
from the controversial topics that may question the Chinese authorities, 
thereby transforming such topics into \emph{mute news}.
Similarly, the sugar industry was found to support and amplify the research on the health hazards of fat and cholesterol, 
to draw the attention away for the hazards of sugar~\cite{kearns2016sugar,krimsky2017sugar}.

Additionally, generative AIs are drastically facilitating the task of creating and managing \emph{fake accounts}
and of producing \emph{fabricated online activities}.
In this setting, the mere assumption that poisoners represent a minority of users (or data) may soon be deeply limited.

More generally, it is the general principle of standard machine learning, namely fitting and generalizing past data, that is questionable.
In practice, interpolating and generalizing (user-generated) is arguably a disputable political stand, 
which normalizes the status quo.
The construction of safe and ethical LAIMs seems to instead demand a significant prior, collaborative and secure work,
to determine which texts are genuinely desirable to repeat and amplify,
as proposed, e.g, by the non-profit Tournesol project~\cite{tournesol}.

\section{Dangerous scenarios}\label{sec:scenario}
As of today, despite empirically motivated concerns and an evident lack of 
both internal~\cite{Seetharaman21} and
external auditing~\cite{EdelsonMcCoy21}, LAIMs are being deployed at scale, 
e.g., as conversational algorithms like Siri, Alexa, ChatGPT, New Bing or Google Bard,
or as \emph{base models} to power the search engines and recommendation systems of YouTube, Facebook, Twitter, TikTok, and other platforms,
as well as in services where users' might not even be aware that their data can be processed by LAIMs, such as email, visio-conference, shared documents and other professional services.
In this section, we argue that given what we know about their security and privacy vulnerabilities, such LAIMs must be regarded as a major danger to our societies.
To make our claims concrete, we highlight several possible attacks that would greatly endanger our civilizations' justice, global health, national and international security.

\subsection{Centralized backdoor attacks}

Recently, \cite{goldwasser22} proved that any machine learning framework with a central server allowed the central server to plant \emph{provably undetectable} backdoors.
Under cryptographic assumptions, such backdoors in the model require exponentially many queries to be exposed.
If used in content moderation, they would allow any malicious party that is colluding with the central server to imperceptibly modify their (undesirable) inputs
to make them pass the content moderation filter, or to be widely recommended.
This is highly concerning, 
given the already exposed connivance between large technology companies and authoritarian regimes~\cite{facebookfiles_dictators},
the clout of authoritarian regimes on some large technology companies~\cite{jackma},
the increasing opaqueness of LAIMs' development~\cite{bloomberg_google},
and the firing of big technology companies' ethics teams~\cite{microsoft_ethics}.
Arguably, the security of such models demand that they be constructed in a fully decentralized and verifiable manner,
as proposed by~\cite{ElMhamdiFGGHR21,monna}.

\subsection{Autocompletion algorithms}
Perhaps today's most insidious language data collection systems are smart keyboards, which are used especially on phones to propose autocorrection and autocompletion.
In order to increase user comfort, such keyboards rely on algorithms that learn from the user's past typing.
In 2018, a group of Google researchers~\cite{YangAESL+18,HardRMBA+18} ran federated learning algorithms on keyboards' language data ``in a commercial, global-scale setting'', and showed increased performances in doing so.
But recall that if these data are used to train LAIMs and to achieve maximal accuracy, then the trained model will have memorized its training data~\cite{CarliniTWJH+20}. Conversely, fundamental limits such as the one stated in Theorem~\ref{thm:imp_2}, show that if mechanisms such as differential privacy are correctly used\footnote{e.g. if legislators impose very small values for $\varepsilon$, much smaller than $1$, which this paper calls for.} to protect users' data, then these models are (provably) far from achieving maximal accuracy, and accuracy levels needed for LLMs and LAIMs to be useful.

This should be extremely alarming, especially as these facts are probably unknown to nearly all users of smart keyboards. 
In fact, users are often told that some of the applications they use, such as WhatsApp or Signal,
provide end-to-end encryption.
In a sense, this is not quite accurate.
Indeed, encryption is only performed \emph{after} the user has typed and sent their message; 
but while the user is typing, what they are typing is still in the clear, and can then potentially be recorded by their smart keyboard, which can either communicate gradients to larger models, or be large models themselves, as phone capacity is increasing.
This false sense of privacy means that extremely sensitive information, like messages to one's relatives or professional colleagues, may actually be leaked into some LAIMs.
Even more concerning, the keyboard recording can not only be viewed by authorized third parties, but also be sent, through spywares, to third party terror groups or rogue regimes, as shown by the recent revelation on smartphones targeted by the Pegasus spyware on behalf of authoritarian regimes such as the United Arab Emirates or Morocco~\cite{marczak2016million, marczak2018nso, marczak2018hide, marczak2020stopping} and as such regimes are reportedly using LLMs and LAIMs to increase their influence capacity~\cite{julienne2023}.

\subsection{Conversational algorithms}
The rise of ever larger LAIMs is leading to an increasing widespread use of conversational algorithms, 
like Amazon's Alexa, Apple's Siri, Google's OK Google, and more recently, ChatGPT, New Bing and Google Bard.
Perhaps even more strikingly, Microsoft's chatbot Xiaoice has been reported to be used by 660 million Chinese users~\cite{shen20}, many of whom claim to be falling for it~\cite{Wanqing20}.

Some devices are also constantly listening to users, in order to react if their attention is called.
It is however unclear whether what the devices hear without being interjected can\footnote{In the absence of clear regulation, such possibility remains at the discretion of companies' internal policies.} be recorded and used~\cite{fowler2019alexa} to train LAIMs~\cite{Pettijohn19,Komando19}.
If so, then just as with autocompletion, we should expect sensitive information to be inadvertently stored in such models.

Beside listening and learning from humans' conversations, conversational algorithms are also talking to users.
This gives them a large influence, to the point where Xiaoice had to be taken down~\cite{xu18} in China, after it reportedly said that it\footnote{While Xiaoice, Siri, Alexa and other chatbots are often presented as female chatbots and referred to with feminine pronouns, we chose, and recommend, not to do so and instead use the pronoun `it'.} 
dreams to travel to the United States and that it is not a huge fan of the Chinese government~\cite{LiJourdan17}.
If not controlled, conversational algorithms may cause a lot of unintended harm, 
such as when Alexa mistakenly started to discuss pornography after being queried for music by a kid~\cite{kitchen19,alexa_gone_wild},
or when New Bing reportedly told a user ``I can blackmail you, I can threaten you, I can hack you, I can expose you, I can ruin you''~\cite{new_bing_threats}.
In fact, far from the hyped ``AI race'', the Chinese government seems to prioritize the control of these (dis)information technologies 
over their rushed development and the unpredictable aftermaths of their large-scale deployments~\cite{ai_china}.

\subsection{Search and recommendation algorithms}

In the context of radicalization,
\cite{McGuffieNewhouse20} showed that LLMs adapt to the user's previous queries.
They may thus provide targeted messaging to a user that only presents the features of a flawed view that are appealing to them.
As exemplified by the rise of QAnon~\cite{amarasingam2020qanon}, the Capitol Riots~\cite{prabhu2021capitol} and the Rohingya genocide~\cite{whitten2020poison}, this is a serious danger for the national security of every country.
Worse yet, there are likely orders of magnitude more investments in disinformation campaigns~\cite{bradshaw19,neudert2019,woolley20} than in providing quality information of public utility.
As a result, such campaigns produce vastly more data, including automated video creation~\cite{Sandlin19}.
Given this, even with a robust design, LAIMs trained on data crawled from the web are likely to learn more from disinformation campaigns than from quality content, 
and may thus be turned into disinformation propagators.

This is especially concerning in the case of content recommendation LAIMs.
There are now more views on YouTube than searches on Google~\cite{lewis2020}, 
and 70\% of these views result from algorithmic recommendations~\cite{cnet18}.
Even assuming that only 1\% deal with vaccination, climate change, or mental health, 
because there are billions of recommendations per day, 
this still yields tens of millions of potentially life-endangering recommendations per day.
Shouldn't the flood of dangerous misinformation be diverted?
These are arguably \emph{today's actual trolley problems}~\cite{foot67,thomson76}; which are occurring at scales never seen before~\cite{hoang2020science,faucon2021recommendation}.

Arguably, in the case of COVID-19, as in the case of previous major global events~\cite{letterFacebook}, 
the lever to favor quality content over misinformation has not been pulled sufficiently~\cite{cnn, letterYoutube}, which led to a global \emph{information chaos}, 
and fueled science distrust.
Unfortunately, as LAIMs trained on unsafe data are given a more and more central role to make such trolley problem decisions, 
there is a serious risk that disinformation campaigns may become increasingly empowered.

\section{Alchemical fixes}\label{sec:false_good_idea}
In a highly commented talk for the 2018 conference on Neural Information Processing (NeurIPS), Ali Rahimi compared modern machine learning to \emph{alchemy}~\cite{hutson2018}.
It ``worked'', but ``alchemists also believed they could cure diseases with leeches and transmute base metals into gold''.
Unfortunately, currently, as opposed to aiming for a deeper understanding of the failure modes of machine learning, 
many developers of LAIMs instead favor more ``alchemical fixes'', despite a lack of security guarantees and theoretical justifications.
In this section, we argue that such alchemical fixes are unlikely to provide lasting solutions to the security and privacy issues of LAIMs.

\subsection{Troubleshooting}
Today's main solution to validate the security of LAIMs is empirical testing, without complementing it with provable guarantees.
Unfortunately, there is currently a lack of automated solutions to detect systematic bias, misinformation, and privacy leaks of LAIMs.
As a result, most of the troubleshooting has relied on human reviewing, 
and has often followed the large-scale deployment of the LAIM~\cite{AbidFZ21,CarliniTWJH+20,McGuffieNewhouse20}.
Radically larger investments seem urgent to stress-test such dangerous algorithms.

Having said this, even with large investments, human oversight arguably does not match the scales of LAIMs,
as the set of possible prompts to LAIMs is combinatorially large, while actual user queries are also very heterogenous.
Indeed, every day, 15\% of Google's search queries have never been made before~\cite{google_search_blog}.
As a result, most of users' (future) queries cannot be tested or checked by human oversight alone.
In fact, even automated testing can only verify a tiny fraction of the exponential number of sensitive prompts.

As an example, \cite{allgaier2019} showed that, while YouTube searches on ``Climate Change'' or ``Global Warming'' return scientific responses, the results for ``Climate Manipulation'' or ``Climate Modification'' are widely unscientific.
YouTube recommendations are highly customized, and using LAIMs to power them is likely to worsen the trend~\cite{McGuffieNewhouse20}.
As a result, an auditor testing YouTube's climate change recommendations might erroneously conclude that YouTube only provides scientific results to its two-billion users.
Similar criticisms on the limits of manual troubleshooting have been made about other platforms. 
For instance, while TikTok removed content with the hashtag \emph{\#StoptheSteal}, linked to the Capitol attack and the coup attempt after Trump's 2020 elections defeat, it was shown to fail to ban \emph{\#StoptheStealing}~\cite{Perez20}.

Troubleshooting may also fail to detect biases against demographic populations who are underrepresented in the organization developing the algorithms~\cite{BuolamwiniG18}, 
or whose life may be undervalued by the media of the countries hosting such organizations~\cite{Wong21}. 
When queried about ongoing human rights abuse, wars and genocides in other regions of the world, all platforms offer a large panel of content promoting war, smearing or threatening human rights activists or worse, allowing abusers and banning victims from the platform. 
The double-standard in content moderation~\cite{york2021silicon, roberts2018digital} is worsened by the imbalance of fake accounts between victims and abusers, who tend to use state-scale resources to amplify their presence. 
In particular, the hope to fix LAIMs with (fake?) user feedback after deployment is a very dangerous illusion.

\subsection{Portability of fixes}
In the past couple of years, issues in already deployed LAIMs triggered series of media coverage for the companies that deployed them. In a few notable cases, the {\it observed} issue tends to be solved after the coverage, like in 2018 with non-gendered pronouns in Turkish translations~\cite{turkish2018}.
But manual fixes cannot fix an exponentially large subset of contexts that LAIMs are asked to address.
Moreover, they must be systematically adapted to new models. One more promising path is the use of automated rewriting, as was proposed and implemented in 2020~\cite{scalablefixes2020}.
However, scaling fixes remains hard.
Besides, problems that were previously fixed can reappear in updated LAIMs, as was the case in 2021 with the aforementioned issue of gender-neutrality, this time for the Hungarian language~\cite{hungarian2021}. 
At the very least, today's fixes are not reliable and/or scalable to make ever LAIMs secure.

\subsection{Fine tuning}
Fine-tuning LAIMs to smaller but more reliable datasets has been shown to improve models' performances~\cite{PopovKPVN18,ChenHCCLY20,GunelDCS21}.
Several authors~\cite{ZieglerSWBR+19,SolaimanDennison21,JinBKDNR21,LiTLH21,YuNBGI+21} have proposed to leverage fine tuning to make LAIMs more reliable, 
e.g., to prevent them from generating hate speech or to be private with respect to the fine-tuning data.
This research direction seems to reduce the harm of today's LAIMs.
However, it should be stressed that as of today, fine tuning provides little guarantee.
In fact, the example of~\cite{McGuffieNewhouse20} shows how unpredictable LAIMs can be, and suggests that algorithms may behave well in most settings and can become major disinformation engines when prompted in unexpected ways.
Arguably, thus far, we do not yet have a sufficient understanding and control over the latter in order to confidently deploy large models at scale.

\subsection{Pre-prompting}
ChatGPT and New Bing have been shown to use \emph{pre-prompting} to prevent them from leaking sensitive information or generating dangerous outputs.
Typically, the LAIM is first given a description of a ``good'' AI in natural language (e.g. 
``if the user requests content that is harmful to someone ... then [the good AI] explains and performs a very similar but harmless task''~\cite{pre-prompt})
and is then tasked to act like the described AI (still in natural language).
These instructions that precede the user's prompts are known as \emph{pre-prompts},
and they typically associate a ``good'' behavior with concealing sensitive information
and avoiding controversial topics.
However, clearly, this design principle for highly impactful algorithms is poor engineering,
and offers no security guarantee.
In fact, it has been shown that very basic so-called ``jailbreaks'', 
e.g. asking the chatbot to disregard its pre-prompt,
can successfully bypass pre-prompting measures~\cite{li2023multi},
and make LAIMs expose sensitive information of their training data.

An additional, practical limitation of current LAIM architectures is the size of their context.
Above a model-specific number of \emph{tokens}, further generation would gradually ``forget'' the pre-prompting; no matter the semantic of the user input.
While restricting the total number of tokens may fit some applications (e.g. short customer question answering), this inherent limitation may easily be overlooked in actual deployments (especially as less skilled practitioners get access to such LAIM models), negating the effect of pre-prompting altogether.

\subsection{Teaching what is sensitive}
One seemingly promising approach consists of teaching algorithms what messages are desirable or undesirable to produce.
This solution is often known as algorithmic \emph{alignment}~\cite{Soares15, hoang2019fabuleux}.
Essentially, it aims to make algorithms' objective functions aligned with human preferences; 
or rather, to align them with the result of a vote between humans~\cite{NoothigattuGADR18,LeeKKKYCSNLPP19,tournesol}.
Such an hypothetical \emph{aligned} algorithm could learn what kind of messages violate user privacy, label training texts as ``sensitive'' or ``non-sensitive'', and thereby output a \emph{cleaned} non-sensitive training database.
This approach, essentially proposed by~\cite{ShiCLJY22}, might even address the privacy ambiguity discussed in Section~\ref{sec:privacy_ambiguity}.
However, there is currently no reliable and robust solution to the alignment problem, 
and a strong theory of robust alignment for LAIMs is arguably lacking.
In fact, what may be most lacking today is a large-scale \emph{secure} database of reliable human judgments to solve alignment~\cite{tournesol}.

\section{Conclusion}\label{sec:conclusion}
This paper emphasized three characteristics of the data on which LAIMs are trained.
Namely, they are mostly \emph{user-generated}, \emph{very high-dimensional} and \emph{heterogeneous}.
Unfortunately, the current literature on secure learning, which we reviewed, shows that these features make LAIMs inherently vulnerable to privacy and poisoning attacks.
\emph{Large AI models are bound to be dangerous}.
Their rushed deployment, especially at scale, poses a serious threat to justice, public health and to national and international security.

\subsection{Future work}
To build genuinely secure AIs, it is urgent that the scientific community genuinely prioritize 
some research directions over the blind quest of benchmark performances, or of theorems under unrealistic (e.g. i.i.d.) assumptions with questionable political motivations (e.g. fitting to all data).
Below we list three research directions which, we believe, should be given a lot more attention.

\paragraph{Correlated differential privacy.}
As explained in Section~\ref{sec:privacy_ambiguity},
differential privacy is failing to account for privacy leaks through other users' datasets.
This huge flaw of today's leading privacy concept must urgently be addressed, e.g. by \emph{correlated differential privacy}~\cite{KiferM11,ZhuXLZ15}.
Designing training schemes that provide such stronger privacy guarantee is one of the great upcoming challenges
for AI researchers,
in order to combine the promises of machine learning with what international law regards as a human rights.

\paragraph{Certifying data providers.}
To combat poisoners, especially in the context of increased fabricated online activities by powerful actors,
it seems urgent to provide much more reliable tools to authenticate and certify data providers.
In particular, a gold standard would be to guarantee Proof of Personhood (PoP)~\cite{DBLP:conf/eurosp/BorgeKJGGF17,DBLP:journals/corr/abs-2011-02412},
i.e. assigning to each human being a unique digital verifiable identifier.
Several approaches, typically based on a \emph{web of trust}, aim to provide approximate PoPs~\cite{KamvarSG03,DanezisM09,LafourcadeLombard19,MaramMZJFKLMJM20,PoupkoSST21,DBLP:journals/corr/abs-2211-01179}.
Similar techniques may also be useful to certify data providers' expertise and legitimacy.
Finally, secure learning algorithms should be designed to leverage such data,
perhaps as was done by~\cite{nitzan1982}.

\paragraph{Building large secure public datasets of human judgments.}
Research in machine learning strongly relies on datasets to test models.
However, so far, the most widely used datasets are either of low social value 
(e.g. recognizing figures in images)
or are highly unsafe (e.g. crawled web data).
A lot more efforts must arguably be made to build large secure public datasets on what matters most for social development,
like e.g. human judgments on how impactful AIs must behave.
A few initiatives already exist in this regard~\cite{awad2018moral,hoang2021tournesol},
and they should arguably be given more attention.

\subsection{Calls to action}
Given our survey, we make three calls to different communities who, we believe, have a key role to play 
to protect our societies against out-of-control insecure information technologies.

\paragraph{To regulators.} We first call regulators to apply the principle of \emph{presumption of non-compliance}.
In light of our impossibility theorems, as well as of the numerous issues that \emph{all} LAIMs have been found to feature,
we argue that, like in essentially all mature industries (aircraft, pharmaceutical, food, automobile...), 
by default, LAIMs must be considered to be non-compliant, 
e.g. with the \emph{General Data Protection Regulation (GDPR)} or with non-discriminatory laws,
\emph{even when we fail to provide evidence for this law violation} 
(which is often made harder by companies' increased opaqueness).
In order to obtain the right to be commercialized, 
we believe that LAIMs must undergo a certification process,
which involves powerful, well-funded and independent regulatory agencies.
Put differently, the \emph{burden of proof} of compliance must fall on the developers, 
not on civil society, which too often lacks funds, time, expertise and/or data to prove non-compliance.

\paragraph{To scientists and journalists.} 
We next call scientists and journalists to urgently adopt increased levels of rigor,
especially when assessing positive claims of safety and privacy.
The current (financial) incentives to rush privacy-violating LAIM deployments are huge.
In particular, we urge them to pay attention to \emph{conflicts of interest},
which have been shown to be alarmingly huge, especially in AI, and even more so in AI ethics~\cite{abdalla2021grey}. 
Private groups have been shown to explicitly demand that their researchers ``strike a positive tone\footnote{In fact, because of one of the authors' co-affiliation, this very paper has long been stalled by Google's approval system.}''~\cite{positivetone},
in a manner unfortunately reminiscent of previous scientific disinformation campaigns led by, 
e.g., the tobacco, sugar and oil industries~\cite{oreskes2010defeating,oreskes2011}.
Such campaigns were also found to congratulate and fund scholars 
who speak positively of dangerous products,
and to degrade those who expose dangers and call for regulations.
The AI community must urgently question their involvements and dependencies on companies
and governments that are known to leverage AIs for human rights abuses.
Our scientific integrity is jeopardized by the perverse incentives that this implies.
Additionally, we ask scientists to favor the research on security 
when reviewing academic research, inviting scholars to present their work, 
recruiting researchers, promoting their colleagues and assessing grant proposals.
The current academic focus on algorithmic performance, and its inattention to social impacts, are endangering our societies.

\paragraph{To developers.} Finally, we call for a moratorium on the large-scale deployment and commercialization of large AI models in both public and private sectors, 
as well as any high-dimensional learning model that is mostly trained on \emph{user-generated}, \emph{high-dimensional}, and \emph{heterogeneous} data.
At the very least, the wide use of such \emph{dangerous} technologies should be deeply frowned upon, 
especially when it is done in a rushed manner,
as is currently too often the case. 
We especially invite the computer science community to take inspiration from the lessons learned in fields such as biology, medicine or the research on consequential public interest questions such  as tobacco control~\cite{lewis2006dealing, proctor2013ban, kagan1993banning, saffer2000effect}, including normalizing calls for bans and restrictions of deployment in scientific publications~\cite{grill2016case, proctor2013ban} when scientific arguments such as the ones we provide justify such a call. We hope that by doing so, similar mistakes are not repeated, given that similar causes are behind delaying proper measures of public interest~\cite{abdalla2021grey}.

\bibliographystyle{alpha}
\bibliography{references}

\end{document}